\newcommand{\Omit}[1]{}
\newcommand{\denselist}{\itemsep -1pt\partopsep 0pt}
\newcommand{\tup}[1]{\langle #1 \rangle}
\newcommand{\pair}[1]{\langle #1 \rangle}
\newcommand{\citeay}[1]{\citeauthor{#1} [\citeyear{#1}]}
\newtheorem{definition}{Definition}
\newtheorem{theorem}[definition]{Theorem}
\newcommand{\alert}[1]{\textcolor{red}{\bf #1}}
\newcommand{\Q}{\mathcal{Q}}
\newcommand{\Eff}{{\mathit{Eff}}}
\newcommand{\abst}[2]{\tup{#1;#2}}
\newcommand{\Rule}[2]{\ensuremath{#1 \Rightarrow #2}}
\newcommand{\pplus}{\hspace{-.05em}\raisebox{.15ex}{\footnotesize$\uparrow$}}
\newcommand{\mminus}{\hspace{-.05em}\raisebox{.15ex}{\footnotesize$\downarrow$}}
\newcommand{\Stack}{\text{Stack}}
\newcommand{\Unstack}{\text{Unstack}}
\newcommand{\Pickup}{\text{Pickup}}
\newcommand{\Putdown}{\text{Putdown}}
\newcommand{\Move}{\text{Move}}
\newcommand{\Example}{\medskip\noindent\textbf{Example.}\xspace}
\title{Features, Projections, and Representation Change for Generalized Planning}
\author{
Blai Bonet$^1$ {\normalfont and}
Hector Geffner$^2$
\\
$^1$ Universidad Sim\'on Bol\'{\i}var, Caracas, Venezuela\\
$^2$ ICREA \& Universitat Pompeu Fabra, Barcelona, Spain\\
bonet@usb.ve,
hector.geffner@upf.edu
}
\begin{document}

\maketitle

\begin{abstract}
Generalized planning is concerned with the characterization
and computation of plans that solve many instances at once. 
In the standard formulation, a generalized plan is a mapping
from feature or observation histories into actions, assuming
that the instances share a common pool of features and actions.
This assumption, however, excludes the standard relational
planning domains where actions and objects change across
instances. 
In this work, we extend the standard formulation of generalized planning
to such domains. This is achieved by projecting the actions over
the features, resulting in a common set of \emph{abstract actions}
which can be tested for soundness and completeness, and which can
be used for generating general policies such as ``if the gripper
is empty, pick the clear block above $x$ and place it on the table''
that achieve the goal $clear(x)$ in any Blocksworld instance.
In this policy, ``pick the clear block above $x$'' is an abstract
action that may represent the action $\Unstack(a,b)$ in one
situation and the action $\Unstack(b,c)$ in another.
Transformations are also introduced for computing such policies
by means of fully observable non-deterministic (FOND) planners.
The value of generalized representations for \emph{learning}
general policies is also discussed.
\end{abstract}

\section{Introduction}

Generalized planning is concerned with the characterization and computation
of plans that solve many instances at once 
\cite{srivastava08learning,bonet09automatic,srivastava:generalized,hu:generalized,BelleL16,anders:generalized}.
For example, the policy ``if left  of the  target, move right'' and ``if 
 right of the  target, move left'', solves the problem of getting to the
target in a $1\times n$ environment, regardless of the agent and
target positions  or  the value of $n$.
% In the standard formulation \cite{hu:generalized},
% a generalized plan is a mapping from histories of feature vectors into actions,
% and the plan solves a class problems if it solves each problem in the class.

The standard, semantic, formulation of generalized planning due to \citeay{hu:generalized}
assumes that all the problems
in the class share a common set of features and actions. Often, however, this
assumption is false. For example, in the Blocksworld, the policy ``if the gripper
is empty, pick the clear block above $x$ and place it on the table'' eventually
achieves the goal $clear(x)$ in \emph{any} instance, yet the set of such instances
do not have (ground) actions in common. Indeed, the expression ``pick up the clear
block above $x$'' may mean ``pick block $a$ from $b$'' in one case, and  ``pick
block $c$ from $d$'' in another. A similar situation arises in most of the standard
relational planning domains.
% domains used in classical planning that admit compact general
% strategies but whose instances do not have actions in common.
Instances  share the same action \emph{schemas} but policies cannot
map features into action schemas; they must select concrete, ground, actions.

In this work, we show how to extend the formulation of generalized planning to
relational domains where the set of actions and objects depend on the instance.
This is done by \emph{projecting the actions over a  common set of features},
resulting in a common set of \emph{general actions} that can be used in
generalized plans.
% General actions are action descriptions that may instantiate
% into one action in one situation and into a different action in another situation
% (or instance).
We also address the computation of the resulting general  policies 
by means of  transformations and \emph{fully observable non-deterministic (FOND)}
planners, and discuss the relevance to  work on \emph{learning general policies}.
% 
% for navigating 3D environments \cite{drl:3dnav} or  solving families of Sokoban instances \cite{drl:sokoban}
% in the context of deep reinforcement learning.

\Omit{
The differences with generalized planning are
that the generalized learning approaches are inductive and provide no guarantees on the conditions
under which the learned policies will work. Generalized planning approaches are more demanding on
the inputs that  they require (models and features) and are  more challenging computationally,
but are better suited for providing the elements for understanding and ensuring generalization
across different problem instances.
}

Generalized planning has also been formulated as a problem in first-order logic
with solutions associated with programs with loops, where actions schemas do not
need to be instantiated \cite{srivastava:generalized}. The resulting formulation,
however, is complex and cannot benefit from existing propositional planners.
First-order decision theoretic planning can also be used to generate general
plans but these are only effective over finite horizons
\cite{boutilier2001symbolic,wang2008first,van2012solving}.

The paper is organized as follows. We review the definition of generalized
planning and introduce abstract actions and projections. We then consider the computation of
generalized policies, look at  examples,  and discuss related work and challenges.

\Omit{
Connections to generalized policy size, 
of width in classical planning \cite{nir:ecai2012}, and discuss the value of
generalized representations for planning and learning and their relations to
agent or deictic representations \cite{ballard:deictic,agre:deictic,kaelibling:deictic,barto:agent-centered}.
\alert{Check this!}
}

\section{Generalized Planning}

The planning \emph{instances} $P$ that we consider are classical planning problems
expressed in some compact language as a tuple $P=\tup{V,I,A,G}$ where $V$ is a
set of state variables that can take a finite set of values (boolean or not), $I$
is a set of atoms over $V$ defining an initial state $s_0$, $G$ is a set of atoms
or literals over $V$ describing the goal states, and $A$ is a set of actions $a$
with their preconditions and effects that define the set $A(s)$ of actions applicable
in any state $s$, and the successor state function $f(a,s)$, $a \in A(s)$.
A state is a valuation over $V$ and a solution to $P$ is an applicable action
sequence $\pi=a_0,\ldots,a_n$ that generates  a state sequence $s_0,s_1,\ldots,s_{n}$
where $s_n$ is a goal state (makes $G$ true). In this sequence, $a_i \in A(s_i)$ and $s_{i+1}=f(a_i,s_i)$ for $i=0, \ldots, n-1$.
A state $s$ is reachable in $P$ if $s=s_n$ for one such sequence.

A general planning problem $\Q$ is a collection of instances $P$ assumed to share
a common set of actions and a common set of features or observations \cite{hu:generalized}.
A \emph{boolean feature} $p$ for a class $\Q$ of problems represents a function $\phi_p$
that takes an instance $P$ from $\Q$ and a reachable state $s$ in  $P$,\footnote{
 Non-reachable states are often not meaningful,  like a state when one block is 
 on top of itself.
% Features must have a unique,  well-defined value over any reachable state.
}
and results in a boolean value denoted as $\phi_p(s)$, with
the reference to the problem $P$ omitted. 
% (where we have omitted the reference to the problem $P$).
Features are sometimes associated with observations, and in such cases $\phi_p$ is
a sensing function.
% A boolean feature has a well-defined truth value over any state
% of any planning instance in $\Q$.\footnote{Features must evaluate to a well-defined
%   value over any state of any problem $P$ in $\Q$, yet such values are meant to be
%   meaningful only over the states $s$ that are \emph{reachable} in $P$.}
For example, if $\Q$ is the class of all block instances with goal $on(a,b)$ for two
blocks $a$ and $b$, $above(a,b)$ would be a feature that is true when  $a$ is
above $b$. If $F$ denotes the set of all boolean features, $\phi_F(s)$ denotes
the values of all features in the state $s$.

A \emph{general policy} or \emph{plan} for a class of problems $\Q$ sharing a common set of boolean features
and actions is a partial function  $\pi$ that maps feature histories into actions.
A policy $\pi$ solves the generalized problem $\Q$ if it solves each instance $P$ in $\Q$.
A policy $\pi$ solves  $P$ if the state trajectory $s_0,\ldots,s_{n}$
induced by $\pi$ on $P$ is goal reaching. In such  trajectory, 
$a_i=\pi(\phi_F(s_0),\ldots,\phi_F(s_i))$,  % is in $A(s_i)$ % no se necesita, esta implicito
$s_{i+1}=f(a_i,s_i)$ for $i<n$, and $s_{n}$ is the first state in the sequence where
the goal of $P$ is true, the policy $\pi$ is not defined, or $\pi$ returns an action $a_{n}$ that is not applicable at $s_n$.
In the following, for simplicity, we only consider \emph{memoryless policies}
that map single feature valuations into actions and thus  $a_i=\pi(\phi_F(s_i))$.
% instead of policies that map
% histories of feature valuations into actions.
% In the trajectories induced by such policies $a_i=\pi(\phi_F(s_i))$.

\Omit{
As an illustration, if $\Q$ represents the class of problems $P$ where an agent has
to reach a target by moving  up, down, right, or left in a $n \times n$ grid,
and $u$, $d$, $r$, $l$ are four boolean features representing whether the target
is above, below,  right, or  left  of the agent, the (memoryless) policy
encoded by the rules ``if $u$, move-$u$'', ``if $d$, move-$d$'', ``if not $u$, not $d$,
and $r$, move-$r$'', and ``if not $u$, not $d$, and $l$, move-$l$''
solves $P$ for any initial location for the agent and target, and any value of $n$.
}

\subsection{Numerical Features}

For extending the formulation to domains where actions take arguments that
vary from instance to instance, we need numerical features.
A \emph{numerical feature} $n$ for a generalized problem $\Q$ represents a
function $\phi_n$ that takes an instance $P$ and a state $s$, 
and results in a non-negative integer value denoted as $\phi_n(s)$.
For the problem $\Q$ representing the Blocksworld instances with goal
$on(a,b)$, a numerical feature $n(a)$ can represent the number of blocks above
$a$. The set of features $F$ becomes thus a pair $F=\tup{B,N}$ of boolean and numerical
features $B$ and $N$. A \emph{valuation} over $F$ is an assignment of truth values
to the boolean features $p \in B$ and non-negative integer values to the numerical
features $n \in N$. A   \emph{boolean valuation} over $F$, on the other hand,
is an assignment of truth values to the boolean features $p \in B$ and to the
atoms $n=0$ for $n \in N$. The feature valuation  for  a state $s$ is denoted as $\phi_F(s)$,
while the boolean  valuation as $\phi_B(s)$. The number of feature valuations is infinite
but  the number of boolean valuations is $2^{|F|}$. Policies for a generalized problem $\Q$
over a set of features $F=\tup{B,N}$ are defined as:

\begin{definition}[Generalized Planning]
  \label{def:gp:1}
  A policy for a generalized problem $\Q$ over the features $F=\tup{B,N}$ is a
  partial  mapping $\pi$ from \emph{boolean valuations} over $F$ into the  common  actions in
  $\Q$. The policy $\pi$ solves $\Q$ if $\pi$ solves each instance $P$ in $\Q$; i.e.,
  if the trajectory $s_0,\ldots,s_{n}$ induced by the policy $\pi$ in $P$ is   goal reaching.
\end{definition}

As before, the trajectory $s_0,\ldots,s_{n}$ induced by $\pi$ is such that
$a_i=\pi(\phi_B(s_i))$,  %% is in $A(s_i)$  implicito 
$s_{i+1}=f(a_i,s_i)$ for $i<n$,
and $s_{n}$ is the first state in the sequence where $\pi(\phi_B(s_{n})) = \bot$,  $a_{n} \not\in A(s_{n})$,
or the goal of $P$ is true.

\subsection{Parametric Goals and Features}

We are often interested in finding a policy for all instances of a domain whose goal has
a special form. For example, we may be interested in a policy for the class $\Q$ of all
Blocksworld instances with goal of the form $on(x,y)$ where $x$ and $y$ 
are  \emph{any} blocks. Two instances $P$ and $P'$ with goals $on(a,b)$
and $on(c,d)$ respectively, are   part of $\Q$  with  values for
$x$ and $y$  being $a$ and $b$ in $P$, and   $c$ and $d$ in $P'$.
General features using such parameters can  be used, with the value
of the parameters in an instance $P$ being determined by matching the goal of
$P$ with the generic goal.  A parametric feature like $n(x)$ that tracks the
number of blocks above $x$ thus  represents  the feature $n(a)$ in $P$
and the feature $n(c)$ in $P'$. 

% This means that we can consider the class $\Q$ of BW instances with goal of the
% form $on(x,y) \land on(y,z)$, that includes the BW instances with goal $on(a,b) \land on(b,c)$  but not the  instances with goal $on(a,b)$ or
% or goal $on(a,b) \land on(b,c) \land on(c,d)$. At the same time,  the class $\Q'$ of instances with goal $G$ of the form $on(x,y) \land on(w,z)$
% is not well defined because for an instance with goal $G_i: on(a,b) \land on(c,d)$ the match between the  variables in $G$
% and the  constants in $G_i$ is not unique.
% 
% The use of parametric goals and  features does not affect the form of general  policies ....  or the conditions under which a policy solves $\Q$ given by
% Def.~\ref{def:gp:1}. Instead, the value $\phi_f$ of a parametric
% feature $f$ becomes a function of the  state $s$,  the instance $P_i$, and the parametric goal $G$. Still for simplicity,
% we write such value as $\phi_f(s)$.

\section{Abstract Actions}

A generalized planning problem like ``all Blocksworld instances with goal of the form $on(x,y)$''
admits simple and general plans yet such plans cannot be accounted for in the current framework.
The reason is that such instances share no common pool of actions. This is indeed the situation
in relational domains where objects and actions change across instances. 
For dealing with such domains, we define the notion of \emph{abstract actions}
that capture the effect of actions on the common set of features. 

The abstract actions for a generalized problem $\Q$ are defined as pairs $\bar{a}=\abst{Pre}{\Eff}$
where $Pre$ and $\Eff$ are the action precondition and effects expressed in terms of the set of
features $F=\tup{B,N}$. 
The syntax is simple: preconditions may include atoms $p$ and $n=0$ or their negations,  for $p \in B$
and $n \in N$, and effects may be boolean, $p$ or $\neg p$ for $p\in B$, or numerical increments
and decrements expressed as $n\pplus$ and  $n\mminus$ respectively for $n \in N$.
In the language of abstract actions, features represent state variables (fluents),
not functions, with $n$ representing a non-negative integer, and $n\pplus$ and  $n\mminus$
representing updates $n := n+\Delta$ and $n := n-\Delta'$ for \emph{random  positive integers}  $\Delta$
and $\Delta'$ that are not allowed to  make $n$ negative.  The numerical updates are thus \emph{non-deterministic}.
The negation of atom $n=0$ is expressed as $n>0$ and it must be a precondition of any abstract action
that decrease $n$ for ensuring that $n$ remains non-negative.

\Omit{
\alert{*** Two things here. First, minor thing, ``random'' increments/decrements may be confusing to some people.
Second, this one is relevant and important. We are dealing with unspecified positive, finite but \emph{unbounded},
increments.
The transition function $\bar F$ for the feature projection $Q_F$ (below) will have $|\bar F(\bar a,\bar s)|=\infty$ when $\bar a$ is applicable at $\bar s$ and has effect $n\pplus$.
Apparently (and interestingly) this is not causing any problem but it is something relevant as it is non-standard.  ***}
}

\begin{definition}
  \label{def:action:abstract}
  An \emph{abstract action} $\bar{a}$ over the features $F=\tup{B,N}$
  is a pair $\abst{Pre}{\Eff}$ such that 1)~each precondition in $Pre$
  is a literal $p$ or $\neg p$ for $p \in B$, or a literal $n=0$ or $n > 0$
  for $n \in N$, 2)~each effect in $\Eff$ is a boolean literal over $B$,
  or a numerical update $n\pplus$ or  $n\mminus$ for $n\in N$, and 3)~$n>0$
  is in $Pre$ if $n\mminus$ is in $\Eff$.
\end{definition}

We want abstract actions that represent the diverse set of concrete actions
in the different instances $P$ of the  generalized problem $\Q$. 
The number of concrete actions across all instances in $\Q$ is often
infinite  but the number of abstract actions is  bounded by $3^{2|F|}$.

Abstract actions operate over \emph{abstract states} $\bar{s}$ that are {valuations}
over the feature variables. An abstract action $\bar{a}=\abst{Pre}{\Eff}$  \emph{represents
a concrete  action $a$ over a state $s$ in instance $P$} if the two actions are
applicable in $s$, and they \emph{affect the values of the features in a similar way.}
Let us say that $Pre$ is true in $s$ if $Pre$ is true in the valuation $\phi_B(s)$.
Then:

\begin{definition}
  \label{def:action:represents}
  The abstract action $\bar{a} = \abst{Pre}{\Eff}$ over a set of features $F$
  \textbf{represents the action $a$ on the state $s$ of instance $P$} in $\Q$ iff
  1)~the preconditions of $a$ and $\bar{a}$ both hold in $s$, and 
  2)~the effects of $a$ and $\bar{a}$ over $F$ are similar:
  % OLD: submission IJCAI-18
  %i.e., a)~$p$ (resp.\ $\neg p$) is in $\Eff$ iff $p$ (resp.\ $\neg p$) is true 
  %in $\phi_F(f(a,s))$ for each boolean feature $p$ that is affected by $\bar{a}$, and
  %%%%%% for booleans value in \phi_F and \phi_B the same; better use \phi_F here for uniformity
  %b)~$n\pplus$ (resp.\ $n\mminus$) is in $\Eff$ iff $\phi_F(f(a,s)) > \phi_F(s)$
  %(resp.\ $\phi_F(f(a,s)) < \phi_F(s)$) for each numerical feature $n$ that is affected
  %by $\bar{a}$.
  % FINAL VERSION
  % NEW: final version: May 6th, 2018
  \begin{enumerate}[$a)$]
    \item for each boolean feature $p$ in $B$, if $p$ changes from true to false
      (resp.\ from false to true) in transition $s \leadsto f(a,s)$ then
      $\neg p\in\Eff$ (resp.\ $p\in\Eff$),
    \item for each boolean feature $p$ in $B$, if $p$ (resp.\ $\neg p$) is in $\Eff$,
      then $p$ is true (resp.\ false) in $f(a,s)$, and
    \item for each numerical feature $n$ in $N$, $n\mminus$ (resp.\ $n\pplus$)
      in $\Eff$ iff $\phi_n(f(a,s)) < \phi_n(s)$ (resp.\ $\phi_n(f(a,s)) > \phi_n(s)$).
  \end{enumerate}
\end{definition}

% FINAL VERSION
% THIS IS NO LONGER NEEDED AS 'affected' is not formally used
%%\noindent In this definition, a boolean feature $p$ (resp.\ numerical feature $n$) is affected by
%% action $\bar{a}=\abst{Pre}{\Eff}$ if $p$ or $\neg p$ (resp.\ $n\pplus$ or $n\mminus$) is in $\Eff$.

\Example
Let $\Q_{clear}$ be the set of Blocksworld instances $P$ with goal of the form $clear(x)$
and initial situation where the arm is empty,\footnote{Throughout, Blocksworld refers
  to the encoding with action schemas $\Stack(x,y)$, $\Unstack(x,y)$, $\Pickup(x)$, and $\Putdown(x)$.}
and let $F=\{H,n(x)\}$ be the set of features where $H$ holds iff the arm is holding some block, 
and $n(x)$ counts the number of blocks above $x$. The abstract action
\begin{equation}
  \bar{a}=\abst{\neg H,n(x)>0}{H,n(x)\mminus}
  \label{eq:ex1}
\end{equation}
represents any action that picks up a block from above $x$, as it  makes $H$
true and decreases the number of blocks above $x$. If $P$ is an instance with goal $clear(a)$ and $s$ is a state
where $on(b,a)$, $on(c,b)$, and $clear(c)$ are all true, $\bar{a}$ represents the action
$\Unstack(c,b)$ as both actions,  the abstract and the concrete, are applicable in $s$,
make $H$ true, and decrease $n(x)$. Likewise, 
\begin{equation}
\bar{a}'=\abst{H}{\neg H}
  \label{eq:ex2}
\end{equation}
represents any action that places the block being held anywhere but above $x$ (as it does not affect 
$n(x)$). In  the state $s'$ that results from  the state $s$ and the action $\Unstack(c,b)$,
$\bar{a}'$  represents  the action $\Putdown(c)$, and also $\Stack(c,d)$ if
$d$ is a block in $P$ that is clear in both $s$ and $s'$. \qed

\medskip\noindent
We say that an abstract action $\bar{a}$ is \emph{sound} when it represents \emph{some}
concrete action in each (reachable) state $s$ of each instance $P$ of $\Q$
where $\bar{a}$ is applicable:

\begin{definition}
  \label{def:action:sound}
  An abstract action $\bar{a}=\abst{Pre}{\Eff}$ is \emph{sound} 
  in the problem $\Q$ over the  features $F$  iff for each instance $P$ in $\Q$ and each reachable state
  $s$ in $P$ where $Pre$ holds, $\bar{a}$ represents one or more actions $a$ from $P$ on the state $s$. 
\end{definition}

\noindent %  Sound abstract actions can be thought as the projection of the concrete actions in the various instances $P$ of $\Q$ onto the features $F$.
The abstract action \eqref{eq:ex1} is sound in $\Q_{clear}$ as in any reachable state of any instance  where
the arm is empty and there are blocks above $x$, there is  an unstack action that makes the feature $H$
true and decreases the number of blocks above $x$. The abstract action \eqref{eq:ex2}  is sound too.
The two actions, however, do not provide a  \emph{complete} representation:

\begin{definition}
  \label{def:action:complete}
  A set $A$ of abstract actions over the set of features $F$ is \emph{complete} for $\Q$
  if for any instance $P$ in $\Q$, any \emph{reachable} state $s$ of $P$, and any action 
  $a$ that is applicable at $s$, there is an abstract action $\bar{a}$ in $A$ that
  represents $a$ on $s$.  % I think it is important to say that \bar{a} represents
  % a on s because it may be the case that \bar{a} represents a on other state 
  % FINAL VERSION
\end{definition}

The set made of the two actions above is not complete as they cannot represent concrete
actions that, for example, pick up the target block $x$ or a block that is not above $x$.
% They, however, suffice for expressing a policy that solves the generalized problem $\Q_{clear}$,
% where blocks above $x$ are picked until $x$ becomes clear. 

\Example
A sound and complete set of abstract actions $A_{F'}$  for $\Q_{clear}$
can be obtained with the  features $F'=\{X,H,Z,n(x),m(x)\}$
where $H$, $X$, and $Z$ represent that block $x$ is being held,  that
some other block is being held, and that there is a block below $x$
respectively. The new counter   $m(x)$  tracks the number of blocks that are not  in the same tower as $x$
or being held. The abstract actions in $A_{F'}$, with  names for  making  their meaning explicit, are:
\begin{enumerate}[--]
\item $\text{Pick-$x$-some-below}\!=\!\!\abst{\neg H,\!\neg X,\!n(x)\!\!=\!\!0,\!Z}{X,\!\neg Z,\!m(x)\pplus}$,
\item $\text{Pick-$x$-none-below} = \abst{\neg H,\neg X,n(x)=0,\neg Z}{X}$,
\item $\text{Pick-above-$x$} = \abst{\neg H,\neg X, n(x)>0}{H,n(x)\mminus}$,
\item $\text{Pick-other} = \abst{\neg H,\neg X,m(x) > 0}{H,m(x)\mminus}$,
\item $\text{Put-$x$-on-table} = \abst{X}{\neg X}$,
\item $\text{Put-$x$-above-some} = \abst{X}{\neg X,Z,m(x)\mminus}$,
\item $\text{Put-aside} = \abst{H}{\neg H,m(x)\pplus}$,
\item $\text{Put-above-$x$} = \abst{H}{\neg H,n(x)\pplus}$. \qed
\end{enumerate}

\Omit{
\alert{This is only correct if we allow unspecified decrements of the form 
$n:=n-\Delta$ for $0<\Delta\leq n$ in a way similar to increments.
We need this because putting $x$ above a tower, decreases $m(x)$ by the
number of blocks that end up below $x$. I vote for making this simple
generalization and making the whole framework symmetric with respect
to increments and decrements of counters.
Commented Theorem statement to make space.}
}

%\begin{theorem}
%  \label{thm:bw:soundness-completeness}
%  For the  generalized problem  $\Q_{clear}$ and   $F'=\{H,X,n(x),m(x)\}$, the set of abstract actions
%  $A_{F'}$ given by 1--6   is sound and complete.
%\end{theorem}

\section{Generalized Planning Revisited}

The notion of generalized planning can be extended to relational domains, where
instances share no common pool of actions, by moving to the abstract representation
provided by abstract actions:

\begin{definition}
  \label{def:gp:2}
  Let $\Q$ be a generalized planning problem, $F$ be a set of features, and $A_F$ be a set of sound abstract actions. 
  A policy for $\Q$ over $F$ is a partial mapping $\pi$ from the \emph{boolean valuations} over $F$ into $A_F$. 
  The (abstract) policy $\pi$ \emph{solves} $\Q$ if $\pi$ solves each instance $P$ in $\Q$; i.e.,
  if \emph{all} the trajectories $s_0,\ldots,s_{n}$ induced by $\pi$ in $P$ are goal reaching.
\end{definition}

Abstract  policies $\pi$ map boolean valuations over $F$ into sound abstract actions.
We write $a_i \in \pi(\phi_B(s_i))$ to express that  $a_i$ is one of the
concrete actions represented by the abstract action $\bar{a}_i=\pi(\phi_B(s_i))$
in the state $s_i$ of instance $P$. Since the abstract actions in $A_F$ are assumed to be sound,
there must be one such concrete action $a_i$ over any reachable state $s_i$ of any problem $P$  where $\bar{a}_i$ is applicable.
Such concrete action, however, is not necessarily unique.  The trajectories $s_0,\ldots,s_{n}$ induced by the policy $\pi$
in  $P$ are such that $a_i \in \pi(\phi_B(s_i))$, $s_{i+1}=f(a_i,s_i)$ for $i < n$, and $s_n$ is the
first state in the sequence  where the goal of $P$ is true,  $\pi(\phi_B(s_{n})) = \bot$, or $a_{n} \not\in A(s_{n})$. 

\Example
For the generalized problem $\Q_{clear}$ with features $F=\{H,n(x)\}$,
and actions \eqref{eq:ex1} and \eqref{eq:ex2}, the following policy,
expressed in compact form in terms of two rules, is a solution:
\begin{equation}
\Rule{\neg H, n(x)>0}{\bar{a}} \,,\quad \Rule{H, n(x)>0}{\bar{a}'}\,.
\label{eq:ex3}
\end{equation}
The policy picks blocks above $x$ and puts them aside
(not above $x$) until $n(x)$ becomes zero. \qed

\section{Computation}

% We have illustrated how general policies for a relational domain like
% Blocksworld can be expressed using abstract actions.
We focus next on the computation of general policies. We proceed in two steps.
First, we map the generalized problem $\Q$ given features $F=\tup{B,N}$ and a set $A_F$ of
sound actions into a \emph{numerical non-deterministic} problem $Q_F$.
While numerical planning problems can be undecidable \cite{helmert:numerical},
these numerical problems are simpler and correspond to the so-called
\emph{qualitative numerical problems (QNPs)} whose solutions can be
obtained using standard, boolean FOND planners   \cite{srivastava:aaai2011,bonet:ijcai2017}.
%%HG: Changed here above and below
\Omit{
For the second step, we map the QNP problem $Q_F$ into a standard (boolean)
FOND problem \cite{bonet:ijcai2017}.\footnote{The QNP to FOND translation of 
  \citeay{bonet:ijcai2017} is not always sound as claimed.
  We will present the fix elsewhere.
}
}

\subsection{Feature Projection $Q_F$}

%Let $\Q$ be a generalized planning problem, $F=\tup{B,N}$ be a set of features, and
%$A$ be a set of abstract actions over $F$.
For defining the first reduction, we assume a set $A_F$ of sound abstract actions
and formulas $I_F$ and $G_F$, defined over the atoms $p$ for $p\in B$ and $n=0$ for $n\in N$,  that provide a \emph{sound} approximation of  the initial and goal states of  the instances
in $\Q$. For this, it must be the case that for any instance $P$ in $\Q$:
a)~the truth valuation $\phi_B(s)$ for  the initial state $s$ of $P$ satisfies  $I_F$, and
b)~the reachable states $s$ in $P$ with a  truth valuation $\phi_B(s)$ that satisfies $G_F$ are goal states of $P$.
%Given a set $F$ of features $F$, and a set $A$ of \emph{sound} abstract actions
%relative to $F$, a generalized problem $\Q$ with initial and goal formulas $I_F$
%and $G_F$, we define the \emph{projection} $Q_F$ of $\Q$ over $F$ and $A$ as:

\begin{definition}
  \label{def:projection}
   For a given  set of sound abstract actions $A_F$, and sound  initial and goal formulas  $I_F$ and $G_F$ for $\Q$, 
   the \emph{projection} $Q_F=\tup{V_F,I_F,G_F,A_F}$ of $\Q$ is a  \emph{numerical, non-deterministic planning problem}
  with actions $A_F$,  initial and goal  situations $I_F$   and $G_F$, and state variables $V_F=F$.
\end{definition}

The states in a projection $Q_F$ are valuations $\bar{s}$ over the features $F$   that in $Q_F$, like in abstract actions,
represent \emph{state variables} and not functions. The possible initial states are the valuations
that satisfy $I_F$, the goal states are the ones that satisfy $G_F$, and the actions are the abstract actions in  $A_F$.
% The projection is  defined by $F$, and the choices for  $A_F$, $I_F$ and $G_F$,
% that must be all sound. 
% \alert{Need brief explanation on why $I_F/G_F$ are in DNF format!}

Solutions to a  projection $Q_F$ are partial policies $\pi$ that map \emph{boolean}
valuations over $F$ into actions in $A_F$ such that all the  state trajectories induced by $\pi$ in $Q_F$
are goal reaching.  A state trajectory $\bar{s}_0,\ldots,\bar{s}_{n}$ is induced by $\pi$ in $Q_F$
if $\bar{s}_0$ is a state that satisfies $I_F$, $\bar{a}_i=\pi(\bar{s}_i)$,  $\bar{s}_{i+1} \in \bar{F}(\bar{a}_i,\bar{s}_i)$
for $i < n$, and $\bar{s}_{n}$ is the first state in the sequence where $G_F$ is true, 
$\pi(\bar{s}_n)$ is not defined, or $\bar{a}_n$ is not applicable in $\bar{s}_n$.
$\bar{F}(\cdot,\cdot)$ is a non-deterministic transition function defined by the actions in $A_F$
in the usual  way.
\Omit{
Finally, $\bar{s}_{i+1}$ is a possible successor state of action $\bar{a}_i=\pair{Pre,\Eff}$
in state $\bar{s}_i$ where $Pre$ holds, i.e., $\bar{s}_{i+1} \in \bar{F}(\bar{a}_i,\bar{s}_i)$,
iff a)~the value of variables $p$ and $n$ not mentioned in $\Eff$ is the same in
$\bar{s}_{i}$ and $\bar{s}_{i+1}$,
b)~the value of $p$ in $\bar{s}_{i+1}$ is true (resp.\ false) iff $p$ (resp. $\neg p$) is
in $\Eff$, 
c)~the value of $n$ in $\bar{s}_{i+1}$ is higher than the value of $n$ in $\bar{s}_{i}$ if
$n\pplus$ is in $\Eff$, and
d)~the value of $n$ in $\bar{s}_{i+1}$ is the value of $n$ in $\bar{s}_{i}$ minus 1 if
$n\mminus$ is in $\Eff$.}
The soundness of $I_F$, $G_F$, and $A_F$ imply the soundness of the projection $Q_F$:
% If the action in $A_F$ are sound as well as the formulas $I_F$ and $G_F$, the 
% projection $Q_F$ is also sound in the following sense

\begin{theorem}
  \label{thm:qf:soundness}
  If $A_F$ is a set of sound abstract actions for the features $F$, and the formulas $I_F$
  and $G_F$ are sound for $\Q$, then a solution $\pi$ for $Q_F$ is also a solution for $\Q$.
  %the generalized problem $\Q$.
  %Given the features $F$, the sound   abstract actions $A_F$, and the initial and goal formulas $I_F$ and $G_F$  as above, 
  %a  solution $\pi$ to  $Q_F$ is a solution to the generalized problem $\Q$.
  %\label{thm:qf}
\end{theorem}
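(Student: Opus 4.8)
The plan is to unfold the definition of ``solves'' (Definition~\ref{def:gp:2}) and reduce the claim to a statement about individual trajectories. It suffices to fix an arbitrary instance $P\in\Q$ and an arbitrary maximal trajectory $s_0,s_1,\ldots$ induced by $\pi$ in $P$, and to show that this trajectory is finite and ends in a goal state of $P$. The engine of the argument is a simulation: the concrete trajectory should project, state by state, onto a trajectory induced by $\pi$ in the numerical non-deterministic problem $Q_F$, after which the assumption that $\pi$ solves $Q_F$ can be pushed back, through the soundness of $I_F$, $G_F$, and $A_F$, to draw conclusions about $P$.

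First I would establish the simulation lemma: setting $\bar{s}_i=\phi_F(s_i)$, the sequence $\bar{s}_0,\bar{s}_1,\ldots$ is a (prefix of a) trajectory induced by $\pi$ in $Q_F$. The initial state is handled by soundness of $I_F$, which gives $\phi_B(s_0)\models I_F$, so $\bar{s}_0$ is a legal initial state of $Q_F$. For the transitions, note that the policy reads only the boolean valuation, and the boolean valuation of $\bar{s}_i$ is exactly $\phi_B(s_i)$, so $\pi$ selects the same abstract action $\bar{a}_i=\pi(\phi_B(s_i))$ in both worlds. Because $\bar{a}_i$ represents the concrete action $a_i$ taken at $s_i$, Definition~\ref{def:action:represents} guarantees that $\bar{a}_i$ is applicable in $\bar{s}_i$ and that the feature changes across $s_i\leadsto s_{i+1}$ match $\Eff$: features absent from $\Eff$ stay put, boolean literals in $\Eff$ take their asserted values, and a counter carries a $\mminus$ (resp.\ $\pplus$) exactly when it strictly decreases (resp.\ increases). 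This is precisely what it means for $\bar{s}_{i+1}$ to lie in $\bar{F}(\bar{a}_i,\bar{s}_i)$, so the abstract step is legal. I would also record that no intermediate $\bar{s}_i$ can satisfy $G_F$: if it did, soundness of $G_F$ would make $s_i$ a goal state of $P$, contradicting that the concrete trajectory had not yet terminated.

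Finally I would transfer the termination behaviour back to $P$. Since $\pi$ solves $Q_F$, every induced play reaches $G_F$ in finitely many steps, so the simulating play cannot be infinite; hence the concrete trajectory is finite, say ending at $s_n$. If it ends because the goal of $P$ holds, we are done. Otherwise it ends because $\pi$ is undefined at $\phi_B(s_n)$ or because no applicable concrete action is available; here soundness of $A_F$ is the key bridge, since an abstractly applicable action would, by Definition~\ref{def:action:sound}, always yield an applicable represented action in the reachable state $s_n$. Thus the concrete stopping condition forces the corresponding abstract stopping condition at $\bar{s}_n$, and because $\pi$ solves $Q_F$ this terminal abstract state must satisfy $G_F$; soundness of $G_F$ then makes $s_n$ a goal state of $P$. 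I expect the main obstacle to be the numerical matching in the simulation lemma --- making sure that an arbitrary-magnitude concrete decrement or increment is a legal $\bar{F}$-successor under the $n\mminus$/$n\pplus$ semantics --- together with justifying that a solution of $Q_F$ admits no infinite plays, which is what ultimately rules out nonterminating concrete trajectories.
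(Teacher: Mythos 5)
Your proposal is correct and follows essentially the same route as the paper's own argument: project the concrete trajectory onto an abstract trajectory $\bar{s}_i=\phi_F(s_i)$ induced by $\pi$ in $Q_F$ (using soundness of $I_F$ for the initial state and Definition~\ref{def:action:represents} together with soundness of $A_F$ for the steps), then use the fact that $\pi$ solves $Q_F$ plus soundness of $G_F$ to conclude the concrete trajectory is goal reaching. Your write-up is in fact more careful than the paper's brief justification, notably in the terminal case analysis (policy undefined or no applicable represented action) and in ruling out infinite trajectories, but the underlying simulation idea is identical.
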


To see why this results holds, notice that if $\bar{a}_i$ is an abstract action applicable in $\bar{s}_i$ and $s$,
where $s$ is a state for some instance $P$ in $\Q$, then there is at least one concrete action $a_i$ in
$P$ that is represented by $\bar{a}_i$ in $s$. 
Also, for every state trajectory $s_0,\ldots,s_n$ induced by the policy $\pi$ on $P$, there
is one abstract state trajectory $\bar{s}_0,\ldots,\bar{s}_n$ induced by $\pi$ in $Q_F$ that \emph{tracks} the values of the features,
i.e.\ where $\bar{s}_i=\phi_F(s_i)$ for $i=0,\ldots,n$, as a result of the soundness of $A_F$ and the definition of $I_F$.
Since the latter trajectories are goal reaching, the former must be too since $G_F$ is sound.

The projections   $Q_F$ are  sound but not complete. The incompleteness is the result of the abstraction
(non-deterministic feature increments and decrements), and  the choice  of $A_F$,  $I_F$, and $G_F$  that
are  only assumed to be sound. % but not complete.
% $I_F$ and $G_F$ would  sound \emph{and} complete when they
% characterize  the sets of initial and goal states in  the instances $P$ in $\Q$ exactly.

\Example
%For the generalized problem $\Q_{clear}$,
Let us consider the features $F=\{H,n(x)\}$, the set of abstract actions $A_F=\{\bar{a},\bar{a}'\}$
given by %$\bar{a}$ and $\bar{a}'$ in
\eqref{eq:ex1} and \eqref{eq:ex2}, $I_F=\{\neg H,n(x)>0\}$,
and $G_F=\{n(x) = 0\}$.
The policy given by  \eqref{eq:ex3} solves the projection $Q_F=\tup{V_F,I_F,G_F,A_F}$ of $\Q_{clear}$,
and hence, by Theorem~\ref{thm:qf:soundness},  also $Q_{clear}$. \qed

\subsection{Boolean Projection and FOND Problem $Q_F^+$}

%HG below par

The second piece of the computational model is the reduction of the projection $Q_F$ into
a standard (boolean) fully observable non-deterministic (FOND) problem.
For this we exploit a reduction from \emph{qualitative numerical planning} problems (QNPs)
into FOND  \cite{srivastava:aaai2011,bonet:ijcai2017}.
% \footnote{While in $Q_F$, increments and decrements $n\pplus$ and $n\mminus$ stand for
% updates of the form $n:=n+\Delta$ and $n:=n-\Delta'$ where $\Delta'=1$ and $\Delta$ is random positive integer,
% in QNPs, they can be arbitrary  non-negative real-value functions, with the restriction that variables
% $n$ eventually reach  value $n=0$ when they are decremented infinitely often
% and incremented  finitely often.}
This reduction replaces the numerical variables $n$ by \emph{propositional symbols} named ``$n=0$''
that are meant to be true when the numerical variable $n$ has value zero.
The negation of the symbol ``$n=0$'' is denoted as ``$n > 0$''.

\begin{definition}
  Let $\Q$ be a generalized problem and  $Q_F$ be a  projection
  of $\Q$ for $F=\pair{B,N}$.   The \emph{boolean projection} $Q'_F$ associated with $Q_F$ is the FOND
  problem obtained from $Q_F$ by replacing
  1)~the numerical variables $n\in N$ by the symbols $n=0$,
  2)~first-order literals $n=0$ by propositional literals  $n=0$, 
  3)~effects $n\pplus$ by deterministic effects $n > 0$, and 
  4)~effects $n\mminus$ by non-deterministic effects $n>0\,|\,n=0$.
\end{definition}

The boolean projection $Q'_F$ is a FOND problem but neither the \emph{strong} or \emph{strong cyclic}
solutions of $Q'_F$ \cite{cimatti:strong-cyclic} capture the solutions of the numerical problem $Q_F$.
The reason is that the non-deterministic effects $n>0\,|\,n=0$ in $Q'_F$ are neither \emph{fair}, as
assumed in \emph{strong cyclic solutions}, nor \emph{adversarial}, as assumed in \emph{strong solutions}.
They are  \emph{conditionally fair}, meaning that from any time point on, infinite occurrences of
effects $n>0\,|\,n=0$ (decrements) imply the eventual outcome $n=0$, \emph{on the condition} that from
that point on, no action with effect $n > 0$ (increment) occurs.

The policies of the boolean FOND problem $Q'_F$ that capture the policies of
the numerical problem $Q_F$ are the strong cyclic solutions that \emph{terminate}
\cite{srivastava:aaai2011}. We call them the \emph{qualitative solutions} of
$Q'_F$ and define them equivalently as:

%An algorithm for testing if a strong cyclic solution for a FOND problem makes fairness assumptions
%that are valid in this sense (i.e., conditionally fair) is given by \citeay{srivastava:aaai2011}.
%We follow \citeauthor{bonet:ijcai2017} in transforming the FOND
%problem $Q'_F$ so that the intended solutions correspond to the strong cyclic solutions of the
%transformed problem.  We call the intended solutions, the \emph{qualitative solutions}:

\begin{definition}
  A \emph{qualitative solution} of the FOND $Q'_F=\tup{V'_F,I'_F,G'_F,A'_F}$ is a partial mapping $\pi$
  from \emph{boolean feature valuations} into actions in $A'_F$ such that the state-action trajectories
  induced by $\pi$ over $Q'_F$ \emph{that are conditionally fair} are all goal reaching.
\end{definition}

A state-action trajectory  $s'_0, a'_0, s'_1,\ldots$ over $Q'_F$ is \emph{not} conditionally fair
iff a)~it is infinite, b)~after a certain time step $i$, it contains infinite actions with effects
$n>0\,|\,n=0$ and no action with effect $n>0$, and c)~there is no time step after $i$ where $n=0$.
The qualitative solutions of $Q'_F$ capture the solutions of the numerical projection $Q_F$
exactly:

\begin{theorem}
  $\pi$ is a qualitative solution of the boolean FOND $Q'_F$ iff $\pi$ is a solution of the numerical projection $Q_F$.
\end{theorem}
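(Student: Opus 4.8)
The plan is to prove the equivalence through a single \emph{correspondence lemma} between the trajectories of the two problems, from which both directions of the iff follow at once. Observe first that a policy $\pi$ is literally the same object in both problems: it maps boolean valuations over $F$ (truth values of the $p\in B$ together with the atoms ``$n=0$'') into the common action set, so no translation on the policy side is needed. Thus it suffices to show that (i)~every numerical trajectory of $Q_F$ induced by $\pi$ projects, via $\bar{s}_i\mapsto\phi_B(\bar{s}_i)$, to a \emph{conditionally fair} trajectory of $Q'_F$ induced by $\pi$ of the same length and the same goal-reaching status, and (ii)~every conditionally fair trajectory of $Q'_F$ induced by $\pi$ can be \emph{lifted} to a numerical trajectory of $Q_F$ induced by $\pi$ with the same actions, length, and goal-reaching status. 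Given (i) and (ii), if $\pi$ solves $Q_F$ then every conditionally fair trajectory of $Q'_F$ lifts to a goal-reaching numerical one and is therefore goal reaching, so $\pi$ is a qualitative solution; and symmetrically, if $\pi$ is a qualitative solution then every numerical trajectory projects to a conditionally fair and hence goal-reaching boolean one, so $\pi$ solves $Q_F$.

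For direction (i), I would first check that applicability, the boolean effects, and the termination conditions (goal reached, $\pi$ undefined, action inapplicable) depend only on the boolean valuation, so the projected sequence is a legitimate $Q'_F$ trajectory of the same length with the goal true at exactly the same steps. The effect clauses match because $n\pplus$ always yields a positive value, agreeing with the deterministic $n>0$ of $Q'_F$, while any numerical outcome of $n\mminus$ projects to one of the two branches of $n>0\,|\,n=0$. The one real content is conditional fairness: if the projection failed it, some $n$ would be decremented infinitely often with no intervening increment yet never reach $0$; but in $Q_F$ each decrement strictly lowers a non-negative integer and is guarded by the precondition $n>0$, so after the last increment $n$ can be decremented only finitely often before hitting $0$ --- a contradiction.

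Direction (ii) is the crux and contains the main obstacle. The boolean trajectory tells us, at each decrement of $n$, whether the outcome is $n=0$ or $n>0$; realizing an $n=0$ outcome is easy (decrement the current value down to $0$), but realizing an $n>0$ outcome in $Q_F$ requires the current value to be at least $2$, and we cannot know a priori how many such outcomes must be serviced before the value may be replenished by an increment. The resolution is to exploit two facts. First, within any maximal segment containing no increment of $n$, once an outcome $n=0$ occurs the symbol ``$n=0$'' stays true and blocks all further decrements of $n$ (their precondition $n>0$ fails), so each such segment consists of some number $k$ of $n>0$-outcome decrements possibly followed by a single $n=0$-outcome decrement. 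Second, \emph{conditional fairness} guarantees that this $k$ is finite even for the infinite tail that remains after the last increment: an infinite no-increment segment with infinitely many decrements would, by the previous fact, have all outcomes $n>0$ and hence never set $n=0$, which is exactly the forbidden case. I would then construct the lift segment by segment: set the value entering each no-increment segment to $k+1$ (choosing the initial value, or the positive increment $\Delta$ at the segment's opening increment, accordingly), decrement by $1$ for each of the $k$ outcomes $n>0$, and decrement the remaining $1$ to $0$ for the terminal outcome $n=0$ if present. Distinct numerical variables are treated independently, so simultaneous effects cause no interference. Because the lifted trajectory carries the same actions and the same boolean valuations at every step, it has identical termination and goal-reaching behaviour, completing the lemma and hence the theorem.
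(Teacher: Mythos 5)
Your proposal is correct and takes essentially the same route as the paper: the paper's justification of this theorem is precisely the two-way trajectory correspondence under boolean projection of states ($p$ true in $s'_i$ iff true in $s_i$, and ``$n=0$'' true in $s'_i$ iff $n$ has value $0$ in $s_i$), stated in a single sentence without further detail. Your elaborations --- that projected trajectories are vacuously conditionally fair because guarded decrements strictly lower a non-negative integer, and the segment-by-segment lifting that uses conditional fairness to bound the number of $n>0$ outcomes before choosing the entering value $k+1$ --- are exactly the details the paper leaves implicit, and they are sound.
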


This is because for every trajectory $s_0,s_1,\ldots$ induced by policy $\pi$ over the numerical problem
$Q_F$, there is a trajectory $s'_0,s'_1,\ldots$ induced by $\pi$ over the boolean problem $Q'_F$, and vice
versa, where $s'_i$ is the boolean projection of the state $s_i$; namely, $p$ is true in $s'_i$ iff $p$ is
true in $s_i$, and $n=0$ is true (resp.\ false) in $s'_i$ iff $n$ has value (resp.\ greater than) $0$ in $s_i$. 

%HG below

We say that $Q^+_F$ is a \emph{qualitative FOND} associated with the boolean projection $Q'_F$
if the strong cyclic solutions $\pi$ of $Q^+_F$ represent \emph{qualitative solutions} of $Q'_F$;
i.e., strong cyclic solutions of $Q'_F$ that \emph{terminate}  \cite{srivastava:aaai2011}.
Under some conditions, roughly, that variables that are decreased by some action are not increased
by other actions, $Q^+_F$ can be set to $Q'_F$ itself. In other cases, a suitable translation is
needed  \cite{bonet:ijcai2017}. Provided a sound  translation,\footnote{
The translation by \citeay{bonet:ijcai2017} is actually not sound in general
as claimed. We'll report the fix elsewhere.}   solutions to a generalized problem $\Q$ 
can be computed from  $Q^+_F$ using \emph{off-the-shelf FOND planners}:\footnote{For
this, the formulas $I_F$ and $G_F$ must be expressed in DNF. Compiling them into
conjunctions of literals, as expected by FOND planners, is direct using extra
atoms.}

\begin{theorem}
  \label{thm:main}
  Let $Q_F$ be a feature projection of a generalized problem $\Q$, and let $Q^+_F$ be a 
  qualitative FOND problem associated with the boolean projection $Q'_F$.
  The strong cyclic solutions of $Q^+_F$ are solutions of the generalized problem $\Q$.
\end{theorem}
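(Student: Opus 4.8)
The plan is to chain together the two theorems already established in the excerpt, using the qualitative FOND $Q^+_F$ as an intermediary. The target statement relates strong cyclic solutions of $Q^+_F$ to solutions of the generalized problem $\Q$, and the cleanest route is to factor this through the numerical projection $Q_F$. Concretely, I would argue: (i) a strong cyclic solution $\pi$ of $Q^+_F$ is, by the defining property of $Q^+_F$, a qualitative solution of the boolean projection $Q'_F$; (ii) by the (unnumbered) theorem equating qualitative solutions of $Q'_F$ with solutions of $Q_F$, $\pi$ is a solution of the numerical projection $Q_F$; and (iii) by Theorem~\ref{thm:qf:soundness}, since $A_F$, $I_F$, and $G_F$ are sound, a solution of $Q_F$ is a solution of $\Q$. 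Composing these three implications gives that $\pi$ solves $\Q$, which is exactly the claim.

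First I would invoke the definition of $Q^+_F$: it is \emph{defined} so that its strong cyclic solutions represent qualitative solutions of $Q'_F$, i.e. strong cyclic solutions of $Q'_F$ that terminate. This step is essentially bookkeeping, but it is where the hypothesis ``$Q^+_F$ is a qualitative FOND associated with $Q'_F$'' is consumed, and it relies on the \emph{sound translation} assumption flagged in the footnote. I would then apply the theorem stating that $\pi$ is a qualitative solution of $Q'_F$ iff $\pi$ is a solution of $Q_F$; the justification given there is the step-by-step correspondence between trajectories of $Q'_F$ and $Q_F$, where the boolean state $s'_i$ records exactly whether each numerical variable is zero. That correspondence is what lets conditional fairness in $Q'_F$ match the genuine termination/decrement behaviour of the integer-valued variables in $Q_F$.

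The final link is Theorem~\ref{thm:qf:soundness}, whose supporting argument I would reuse: for every concrete trajectory $s_0,\ldots,s_n$ that $\pi$ induces on an instance $P\in\Q$, soundness of $A_F$ guarantees that each applicable abstract action $\bar a_i=\pi(\phi_B(s_i))$ represents at least one concrete action $a_i$ in $s_i$, and the feature-tracking property $\bar s_i=\phi_F(s_i)$ is preserved along the trajectory; soundness of $I_F$ seeds this at $s_0$ and soundness of $G_F$ ensures that reaching $G_F$ abstractly certifies a genuine goal state of $P$. Since $\pi$ is a solution of $Q_F$, all abstract trajectories are goal reaching, hence so are all concrete ones.

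I expect the main obstacle to be not the composition itself — which is a short syllogism once the three results are in hand — but the precise handling of the quantifier on trajectories across the abstraction. A strong cyclic solution of $Q^+_F$ must work against \emph{all} non-deterministic outcomes (subject to conditional fairness), while a solution of $\Q$ must work for \emph{all} instances $P$ and \emph{all} concrete trajectories induced by $\pi$ (recall the trajectory is not uniquely determined, since the abstract action may represent several concrete actions). The subtle point is that every concrete trajectory on any $P$ projects to \emph{some} abstract trajectory that $\pi$ could have followed in $Q_F$ (via the tracking property), so universal quantification over concrete trajectories is covered by universal quantification over abstract ones; one must check that the non-deterministic increment/decrement semantics of $\bar F$ is permissive enough to contain every such projected trajectory, which is precisely what clause~$c)$ of Definition~\ref{def:action:represents} and the unbounded-increment semantics of $n\pplus$ guarantee.
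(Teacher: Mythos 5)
Your proposal is correct and follows exactly the argument the paper intends: the theorem is established by chaining the defining property of $Q^+_F$ (strong cyclic solutions represent qualitative solutions of $Q'_F$), the unnumbered equivalence theorem between qualitative solutions of $Q'_F$ and solutions of $Q_F$, and Theorem~\ref{thm:qf:soundness}. Your additional care about the quantification over concrete versus abstract trajectories matches the paper's own justification of Theorem~\ref{thm:qf:soundness} and fills in nothing that contradicts it.
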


This is a soundness result. Completeness is lost already  in the reduction from $\Q$ to $Q_F$
as discussed above.

%%% FINAL VERSION
%% A SHORT DISCUSSION ADDED, AFTER GAP WHEN REMOVING THM
%%This is a soundness result. Completeness is not possible at this stage,
%%as we have seen above, the feature projection $Q_F$ itself is not complete.

\Omit{
Completeness is not possible at this stage since, as we have seen above,
the feature projection $Q_F$ itself is not complete. 
The qualitative FOND problem $Q^+_F$ differs from $Q'_F$ in the addition fluents and
actions that ensure that non-deterministic effects $n>0\,|\,n=0$ have fair behavior in loops 
where there is no action with effect $n>0$.
%When there are no action that increment a variable in $Q^+_F=Q'_F$.
}

\Example
For $\Q_{clear}$ and the projection $Q_F$ above with $F=\{H,n(x)\}$, the boolean projection
$Q'_F=\tup{V'_F,I'_F,G'_F,A'_F}$ has boolean variables $V'_F=\{H,n(x)=0\}$, initial and goal
formulas $I'_F=\{\neg H, n(x) > 0\}$ and $G'_F=\{n(x)=0\}$, and actions $A'_F=\{\bar{a}'_1,\bar{a}'_2\}$
where $\bar{a}'_1=\abst{\neg H,n(x)>0}{H,n(x) > 0\,|\,n(x)=0}$ and $\bar{a}'_2=\abst{H}{\neg H}$.
Since there are no actions in $Q_F$ that increment the numerical variable $n(x)$, %it follows that
$Q^+_F$ is $Q'_F$ and hence, by Theorem~\ref{thm:main},
the strong cyclic solutions to $Q'_F$ are
solutions to $\Q_{clear}$. The policy shown in \eqref{eq:ex3} was computed from $Q'_F$
by the FOND planner MyND \cite{mynd} in 58 milliseconds. \qed

\Omit{\tiny
\begin{verbatim}
Outout for Qclear1: complete set of actions
Preprocess: 0.000, Search total: 0.048
if { (not (h)), (not (zero nx)) } then pick-above-x;
if { (h), (not (zero nx)) } then put-aside;

Output for Qclear2: 2 features/2 actions, p02: n(x) and m(x) > 0
Preprocess: 0.000, Search total: 0.058
if { (not (h)), (not (q mx)), (not (q nx)), (not (x)), (not (zero mx)), (not (zero nx)) } then set nx;
if { (not (h)), (not (q mx)), (q nx), (not (x)), (not (zero mx)), (not (zero nx)) } then pick-above-x;
if { (h), (not (q mx)), (q nx), (not (x)), (not (zero mx)), (not (zero nx)) } then put-aside;
\end{verbatim}
}

\section{Examples and Experiments}

We illustrate the  representation changes and the resulting methods for computing policies
in four problems. Experiments were done on an Intel i5-4670 CPU with 8Gb of RAM.

\subsection{Moving in Rectangular Grids}

The generalized problem $\Q_{move}$ involves 
an agent that moves in an arbitrary $n\times m$ grid. 
The instances $P$ are represented with atoms $at(x,y)$ and %$adj(x,y,x',y')$, and
actions $\Move(x,y,x',y')$, and have goals of the form $at(x^*,y^*)$.
A general policy for $\Q_{move}$ can be obtained
by introducing the features 
$\Delta_X=|x^*-x_s|$ and $\Delta_Y=|y^*-y_s|$
where $at(x_s,y_s)$ is true in the state $s$.
A projection $Q_F$ of $\Q_{move}$ is obtained
from this set of features $F$, the goal formula
$G_F=\{\Delta_X=0,\Delta_Y=0\}$, and the initial DNF formula 
$I_F$ with four terms corresponding to the four 
truth valuations of the atoms $\Delta_X=0$ and $\Delta_Y=0$.
The set $A_F$ of abstract actions
\begin{enumerate}[--]
\item $\text{Move-in-row} =  \abst{\Delta_X>0}{\Delta_X\mminus}$,
\item $\text{Move-in-column} = \abst{\Delta_Y>0}{\Delta_Y\mminus}$
\end{enumerate}
is sound and captures the actions that move the agent
toward the target horizontally and  vertically.
% If the goal in an instance $P$ is given by the atom $at(3,4)$,
% then actions like $\Move(1,2,2,2)$ or $\Move(3,2,3,3)$ will
% be represented by the abstract action Move-in-row. 
$A_F$ is not complete as it is missing the  two abstract actions
for moving away from the target. The boolean projection $Q'_F$ is $Q_F$
with the atoms $\Delta_X=0$ and $\Delta_Y=0$, and their negations, interpreted
as propositional literals, and the two actions transformed as:
\begin{enumerate}[--]
\item $\text{Move-in-row}' = \abst{\Delta_X>0}{\Delta_X=0\,|\,\Delta_X>0}$,
\item $\text{Move-in-column}'=  \abst{\Delta_Y>0}{\Delta_Y=0\,|\,\Delta_Y>0}$.
\end{enumerate}
The resulting FOND problem $Q^+_F$ is equal to $Q'_F$.
The  MyND planner yields the policy below in 54 milliseconds, which from
Theorem~\ref{thm:main}, is a solution  of $\Q_{move}$:
\begin{enumerate}[--]
\item \Rule{\Delta_X>0, \Delta_Y>0}{\text{Move-in-row}'},
\item \Rule{\Delta_X=0, \Delta_Y>0}{\text{Move-in-column}'}.
\end{enumerate}

\Omit{\tiny
\begin{verbatim}
Output for Qmove:
Preprocess: 0.000, Search total: 0.054
if { (not (zero dx)), (not (zero dy)) } then move-horiz;
if { (zero dx), (not (zero dy)) } then move-vert;
\end{verbatim}
}

\subsection{Sliding Puzzles}

We consider  next  the generalized problem $\Q_{slide}$ where 
a designated tile $t^*$ must be moved to a target location
$(x^*_t,y^*_t)$ in a sliding puzzle.
% of arbitrary size $n \times m$
% greater than $2\times 2$.
The STRIPS encoding of an instance $P$
contains atoms $at(t,x,y)$ and $atB(x,y)$
for the location of tiles and  the ``blank'', 
and actions  $\Move(t,x,y,x',y')$ for exchanging the
location of tile $t$ and the blank if in adjacent cells. 
For solving $\Q_{slide}$, we consider two numerical features:
the Manhattan distance $\Delta_t$ from the current location
of the target tile to its target location,
and the minimal total distance $\Delta_b$ that the blank
must traverse without going through the current target location
so that that target tile can be moved and decrement the value
of $\Delta_t$.

The feature projection $Q_F$ has goal formula $G_F=\{\Delta_t=0\}$,
initial formula $I_F$ given by the DNF with four terms corresponding
to the four truth valuations of the atoms $\Delta_t=0$
and $\Delta_b=0$, and the two abstract actions
\begin{enumerate}[--]
\item $\text{Move-blank} =  \abst{\Delta_b>0}{\Delta_b\mminus}$,
\item $\text{Move-tile}= \abst{\Delta_b=0,\Delta_t>0}{\Delta_t\mminus,\Delta_b\pplus}$.
\end{enumerate}
%that include  the two  abstract actions that are needed for solving the problem, and hence represent  a sound but incomplete set.
The boolean projection $Q'_F$ is $Q_F$ with the atoms $\Delta_t=0$ and $\Delta_b=0$,
and their negations, interpreted as propositional literals, and the two actions 
transformed as:
\begin{enumerate}[--]
\item $\text{Move-blank}' : \abst{\Delta_b>0}{\Delta_b>0\,|\,\Delta_b=0}$,
\item $\text{Move-tile}' :  \abst{\Delta_b=0,\Delta_t>0}{\Delta_b>0, \Delta_t>0\,|\,\Delta_t=0}$.
\end{enumerate}
As before, $Q^+_F$ is equal to $Q'_F$ because the only action that
increments a variable, $\Delta_b$, cannot be used until $\Delta_b=0$.
%% FINAL VERSION
MyND solves $Q^+_F$ in 65 milliseconds, producing the policy below which by Theorem~\ref{thm:main}
solves the generalized problem $\Q_{move}$:
\begin{enumerate}[--]
\item \Rule{\Delta_b>0, \Delta_t>0}{\text{Move-blank}'},
\item \Rule{\Delta_b=0, \Delta_t>0}{\text{Move-tile}'}.
\end{enumerate}

\Omit{\tiny
\begin{verbatim}
Output for Qslide:
Preprocess: 0.000, Search total: 0.065
if { (not (zero delta-b)), (not (zero delta-t)) } then move-blank;
if { (zero delta-b), (not (zero delta-t)) } then move-tile;
\end{verbatim}
}

\subsection{Blocksworld: Achieving $on(x,y)$}

The problem $\Q_{on}$ is about achieving goals of the form $on(x,y)$ in Blocksworld instances where
for simplicity, the gripper is initially empty, and the blocks $x$ and $y$ are in different towers with
blocks above them. We use a set of features $F$ given by $n(x)$ and $n(y)$ for the number of blocks
above $x$ and $y$,  booleans $X$ and $H$ that are true when the gripper is holding $x$ or another block,
and $on(x,y)$ that is true when $x$ is on $y$.
\Omit{
\begin{enumerate}[--]%\denselist\small
\item $n(x)$ and $n(y)$: $\#$ of blocks above $x$ and above $y$,
\item $X, H$: holding block $x$ and another block, 
\item $on(x,y)$ $x$ is on $y$
%   and $on(y,x)$: $x$ is on $y$ and $y$ is on $x$,
% \item $ab(x,y)$ and $ab(y,x)$: $x$ is above $y$ and $y$ is above $x$.
\end{enumerate}
}
%
% The number of possible abstract actions over $F$ is in the
% order of 60,000  (i.e.\ $3^{2|F|}$ for $|F|=5$) but the vast majority of
% such actions are not sound.
As before, %and for space reasons,
we include a sound but incomplete set of actions
$A_F$ needed to solve $\Q_{on}$ where $E$ abbreviates the conjunction $\neg X$ and
$\neg H$:%\footnote{** Check: I tried to simplify $I_F$ and hence set of actions and features needed . **}

\begin{enumerate}[--]%\denselist\small
\item $\text{Pick-$x$} = \abst{E, n(x)=0}{X}$,
% \item $\text{Pick-$y$}=\abst{E, \neg ab(x,y), n_y}{X}$,
% \item $\text{Pick-$x$-on-$y$}=\abst{E, on(x,y), n_x}{X, \neg on(x,y), \neg ab(x,y), n(y)\mminus}$,
% \item $\text{Pick-$x$-above-$y$}=\abst{E, \neg on(x,y), ab(x,y), n_x}{X, \neg ab(x,y), n(y)\mminus}$,
\item $\text{Pick-above-$x$} = \abst{E, n(x) > 0}{H, n(x)\mminus}$,
\item $\text{Pick-above-$y$} = \abst{E,  n(y) > 0}{H, n(y)\mminus}$,
% \item $\text{Pick-above-$x$-$y$}=\abst{E, ab(x,y), !n_x}{H, n(x)\mminus, n(y)\mminus}$, %\CHECK{$n(y)>0$ is implied!}
%\item $\text{Pick-other}=\abst{E}{H}$, \CHECK{(sometimes inconsistent)}
\item $\text{Put-$x$-on-$y$} = \abst{X, n(y) = 0}{\neg X, on(x,y), n(y)\pplus}$,
% \item $\text{Put-$x$-above-$y$}=\abst{X, !n_y}{\neg X, ab(x,y), n(y)\pplus}$,
% \item $\text{Put-$x$-aside}=\abst{X}{\neg X}$,
\item $\text{Put-other-aside} = \abst{H}{\neg H}$.
%\item $\text{Pick-$x$}=\abst{\neg X, \neg Y, \neg H, \neg ab(x,y), n(x)=0}{X}$,
%\item $\text{Pick-$x$-on-$y$}=\abst{\neg X, \neg Y, \neg H, on(x,y), n(x)=0}{X, \neg on(x,y), \neg ab(x,y), n(y)=0}$,
%\item $\text{Pick-$x$-above-$y$}=\abst{\neg X, \neg Y, \neg H, \neg on(x,y), ab(x,y), n(x)=0}{X, \neg ab(x,y), n(y)\mminus}$,
%\item $\text{Pick-above-$x$}=\abst{\neg X, \neg Y, \neg H, \neg ab(x,y), \neg ab(y,x), n(x)>0}{H, n(x)\mminus}$,
%\item $\text{Pick-above-$x$-$y$}=\abst{\neg X, \neg Y, \neg H, ab(x,y), n(x)>0}{H, n(x)\mminus, n(b)\mminus}$, \CHECK{$n(y)>0$ is implied!}
%\item $\text{Other-pick}=\abst{\neg X, \neg Y, \neg H}{H}$, \CHECK{Sometimes inconsistent: see previous example}
%\item $\text{Put-$x$-on-$y$}=\abst{X, n(y)=0}{\neg X, on(x,y), ab(x,y), n(y)\pplus}$,
%\item $\text{Put-$x$-above-$y$}=\abst{X, n(y)>0}{\neg X, ab(x,y), n(y)\pplus}$,
%\item $\text{Put-$x$-aside}=\abst{X}{\neg X}$,
%\item $\text{Put-aside}=\abst{H}{\neg H}$.
\end{enumerate}

\noindent The projected problem $Q_F$ has this set of actions $A_F$, $I_F=\{n(x) > 0, n(y) > 0, E, \neg on(x,y)\}$,
and $G_F=\{on(x,y)\}$.
The projection $Q'_F$ is $Q_F$ but with the propositional reading of the atoms $n(\cdot)=0$ and
their negations, and actions $A'_F$:

\begin{enumerate}[--]%\denselist\small
\item $\text{Pick-$x$}' = \abst{E, n(x)=0}{X}$,
\item $\text{Pick-above-$x$}' = \abst{E, n(x)=0}{H, n(x) > 0 \, | \, n(x)=0}$,
\item $\text{Pick-above-$y$}' = \abst{E, n(y)=0}{H, n(y) > 0 \, | \, n(y)=0}$, 
\item $\text{Put-$x$-on-$y$}' = \abst{X, n(y)>0}{\neg X, on(x,y), n(y) > 0}$
\item $\text{Put-other-aside}' = \abst{H}{\neg H}$.
\end{enumerate}

\noindent Since the effects that increment a variable also achieve the goal, the qualitative
problem $Q^+_F$ is equal to $Q'_F$. The planner MyND over $Q^+_F$ yields  the policy $\pi$
below in 70 milliseconds, that solves $Q^+_F$ and hence $Q_{on}$.
The negated goal condition $\neg on(x,y)$ is part of the following rules but it is
for reasons of clarity:
\begin{enumerate}[--]
%\item \Rule{E, n(y) > 0}{\text{Pick-above-$y$}},
%\item \Rule{E, n(y) = 0, n(x)>0}{\text{Pick-above-$x$}},  
%\item \Rule{E, n(x)> 0, n(y)>0}{\text{Pick-$x$}},
%\item \Rule{H}{\text{Put-other-aside}},
%\item \Rule{X, n(y)}{\text{Put-$x$-on-$y$}}.
%\bigskip
%\item \Rule{E, \neg on(x,y), n(x) > 0, n(y) > 0}{\text{Pick-above-$x$}},
%\item \Rule{H, \neg X, \neg on(x,y), n(x) > 0, n(y) > 0}{\text{Put-other-aside}},
%\item \Rule{H, \neg X, \neg on(x,y), n(x) = 0, n(y) > 0}{\text{Put-other-aside}},
%\item \Rule{E, \neg on(x,y), n(x) = 0, n(y) > 0}{\text{Pick-above-$y$}},
%\item \Rule{H, \neg X, \neg on(x,y), n(x) = 0, n(y) = 0}{\text{Put-other-aside}},
%\item \Rule{E, \neg on(x,y), n(x) = 0, n(y) = 0}{\text{Pick-above-$x$}},
%\item \Rule{X, \neg H, \neg on(x,y), n(x) = 0, n(y) = 0}{\text{Put-$x$-on-$y$}}.
%\bigskip
\item \Rule{E, n(x) > 0, n(y) > 0}{\text{Pick-above-$x$}'},
\item \Rule{H, \neg X, n(x) > 0, n(y) > 0}{\text{Put-other-aside}'},
\item \Rule{H, \neg X, n(x) = 0, n(y) > 0}{\text{Put-other-aside}'},
\item \Rule{E, n(x) = 0, n(y) > 0}{\text{Pick-above-$y$}'},
\item \Rule{H, \neg X, n(x) = 0, n(y) = 0}{\text{Put-other-aside}'},
\item \Rule{E, n(x) = 0, n(y) = 0}{\text{Pick-above-$x$}'},
\item \Rule{X, \neg H, n(x) = 0, n(y) = 0}{\text{Put-$x$-on-$y$}'}.
\end{enumerate}

\Omit{\tiny
\begin{verbatim}
Output for Qon:
Preprocess: 0.000, Search total: 0.070
if { (not (h)), (not (x)), (not (on x y)), (not (zero nx)), (not (zero ny)) } then pick-above-x;
if { (h), (not (x)), (not (on x y)), (not (zero nx)), (not (zero ny)) } then put-other-aside;
if { (h), (not (x)), (not (on x y)), (zero nx), (not (zero ny)) } then put-other-aside;
if { (not (h)), (not (x)), (not (on x y)), (zero nx), (not (zero ny)) } then pick-above-y;
if { (h), (not (x)), (not (on x y)), (zero nx), (zero ny) } then put-other-aside;
if { (not (h)), (not (x)), (not (on x y)), (zero nx), (zero ny) } then pick-x;
if { (not (h)), (x), (not (on x y)), (zero nx), (zero ny) } then put-x-on-y;
\end{verbatim}
}

\Omit{
\subsection{Example: Solving Blocksworld **}

Let $\Q$ be the generalized problem consisting of all Blocksworld instances
(Stack/Unstack version) with goals of the form $on(a,b)$. Consider the following
generalized plan for $\Q$ over the sets $F$ of features and $A_F$ of
actions seen before given by the following rules (with some notation abuse):

\begin{enumerate}[a1.]\small\denselist
% Case A: solves simplest case in two steps
\item[$a1$.] If $E$, $\neg above(a,b)$, $\neg above(b,a)$, $n(a)=n(b)=0$, Pick-$a$
\item[$a2$.] If $h(a)$ and $n(b)=0$, Put-$a$-on-$b$
\smallskip
% Case B: after repeated application of actions, gets reduced to Case 1
\item[$b1$.] If $E$, $\neg above(a,b)$, $\neg above(b,a)$ and $n(b)>0$, Pick-above-$b$
\item[$b2$.] If $H$, Put-aside
\item[$b3$.] If $E$, $\neg above(a,b)$, $\neg above(b,a)$, $n(a)>0$ and $n(b)=0$, Pick-above-$a$
\smallskip
% Case C: after repeated application of actions, gets reduced to Case 2
\item[$c1$.] If $E$, $above(a,b)$ and $n(a)>0$, Pick-above-$a$-$b$
\item[$c2$.] If $E$, $\neg on(a,b)$, $above(a,b)$ and $n(a)=0$, Pick-$a$-above-$b$
\item[$c3$.] If $h(a)$ and $n(b)>0$, Put-$a$-aside
\smallskip
% Case D: after repeated application of actions, gets reduced to Case 2
\item[$d1$.] If $E$, $above(b,a)$ and $n(b)>0$, Pick-above-$b$-$a$
\item[$d2$.] If $E$, $on(b,a)$ and $n(b)=0$, Pick-$b$-on-$a$
\item[$d3$.] If $h(b)$, Put-$b$-aside
\smallskip
% Case E: after repeated application of actions, gets reduced to Case 2
\item[$e1$.] If $E$, $\neg on(b,a)$, $above(b,a)$ and $n(b)=0$, Pick-$b$-above-$a$
\end{enumerate}

\begin{figure}
  \centering
  \resizebox{.90\columnwidth}{!}{
    \begin{tikzpicture}
      \pgfmathsetmacro{\i}{0.3} % inter tower separation
      \pgfmathsetmacro{\d}{1.0} % dimension of blocks
      \pgfmathsetmacro{\s}{0.8} % separation between cases
      \pgfmathsetmacro{\w}{0.3} % offset of table (amount of table past towers in each direction)
      \pgfmathsetmacro{\y}{0.0} 
      % Case A
      \pgfmathsetmacro{\x}{0.0}
      \draw (\x, \y) rectangle (\x+\d, \y+\d)
            (\x, \y+\d) rectangle (\x+\d, \y+2*\d);
      \draw (\x+\d+\i, \y) rectangle (\x+\d+\i+\d, \y+\d)
            (\x+\d+\i, \y+\d) rectangle (\x+\d+\i+\d, \y+2*\d);
      \draw[very thick] (\x-\w, \y) -- (\x+\d+\i+\d+\w, \y);
      \node at ({(\x+\x+\d+\i+\d)/2}, -0.25) {\Large Case $A$ };
      \node at ({(\x+\x+\d)/2}, 0.6) {\LARGE $\vdots$ };
      \node at ({(\x+\x+\d)/2}, 1.5) {\LARGE $a$ };
      \node at ({(\x+\d+\i+\x+\d+\i+\d)/2}, 0.6) {\LARGE $\vdots$ };
      \node at ({(\x+\d+\i+\x+\d+\i+\d)/2}, 1.5) {\LARGE $b$ };
      % Case B
      \pgfmathsetmacro{\x}{\x+\d+\i+\d+\s}
      \draw (\x, \y) rectangle (\x+\d, \y+\d)
            (\x, \y+\d) rectangle (\x+\d, \y+2*\d)
            (\x, \y+2*\d) rectangle (\x+\d, \y+3*\d);
      \draw (\x+\d+\i, \y) rectangle (\x+\d+\i+\d, \y+\d)
            (\x+\d+\i, \y+\d) rectangle (\x+\d+\i+\d, \y+2*\d)
            (\x+\d+\i, \y+2*\d) rectangle (\x+\d+\i+\d, \y+3*\d);
      \draw[very thick] (\x-\w, \y) -- (\x+\d+\i+\d+\w, \y);
      \node at ({(\x+\x+\d+\i+\d)/2}, -0.25) {\Large Case $B$ };
      \node at ({(\x+\x+\d)/2}, 0.6) {\LARGE $\vdots$ };
      \node at ({(\x+\x+\d)/2}, 1.5) {\LARGE $a$ };
      \node at ({(\x+\x+\d)/2}, 2.6) {\LARGE $\vdots$ };
      \node at ({(\x+\d+\i+\x+\d+\i+\d)/2}, 0.6) {\LARGE $\vdots$ };
      \node at ({(\x+\d+\i+\x+\d+\i+\d)/2}, 1.5) {\LARGE $b$ };
      \node at ({(\x+\d+\i+\x+\d+\i+\d)/2}, 2.6) {\LARGE $\vdots$ };
      % Case C
      \pgfmathsetmacro{\x}{\x+\d+\i+\d+\s}
      \draw (\x, \y) rectangle (\x+\d, \y+\d)
            (\x, \y+\d) rectangle (\x+\d, \y+2*\d)
            (\x, \y+2*\d) rectangle (\x+\d, \y+3*\d)
            (\x, \y+3*\d) rectangle (\x+\d, \y+4*\d)
            (\x, \y+4*\d) rectangle (\x+\d, \y+5*\d);
      \draw[very thick] (\x-\w, \y) -- (\x+\d+\w, \y);
      \node at ({(\x+\x+\d)/2}, -0.25) {\Large Case $C$ };
      \node at ({(\x+\x+\d)/2}, 0.6) {\LARGE $\vdots$ };
      \node at ({(\x+\x+\d)/2}, 1.5) {\LARGE $a$ };
      \node at ({(\x+\x+\d)/2}, 2.6) {\LARGE $\vdots$ };
      \node at ({(\x+\x+\d)/2}, 3.5) {\LARGE $b$ };
      \node at ({(\x+\x+\d)/2}, 4.6) {\LARGE $\vdots$ };
      % Case D
      \pgfmathsetmacro{\x}{\x+\d+\s}
      \draw (\x, \y) rectangle (\x+\d, \y+\d)
            (\x, \y+\d) rectangle (\x+\d, \y+2*\d)
            (\x, \y+2*\d) rectangle (\x+\d, \y+3*\d)
            (\x, \y+3*\d) rectangle (\x+\d, \y+4*\d);
      \draw[very thick] (\x-\w, \y) -- (\x+\d+\w, \y);
      \node at ({(\x+\x+\d)/2}, -0.25) {\Large Case $D$ };
      \node at ({(\x+\x+\d)/2}, 0.6) {\LARGE $\vdots$ };
      \node at ({(\x+\x+\d)/2}, 1.5) {\LARGE $a$ };
      \node at ({(\x+\x+\d)/2}, 2.5) {\LARGE $b$ };
      \node at ({(\x+\x+\d)/2}, 3.6) {\LARGE $\vdots$ };
      % Case E
      \pgfmathsetmacro{\x}{\x+\d+\s}
      \draw (\x, \y) rectangle (\x+\d, \y+\d)
            (\x, \y+\d) rectangle (\x+\d, \y+2*\d)
            (\x, \y+2*\d) rectangle (\x+\d, \y+3*\d)
            (\x, \y+3*\d) rectangle (\x+\d, \y+4*\d)
            (\x, \y+4*\d) rectangle (\x+\d, \y+5*\d);
      \draw[very thick] (\x-\w, \y) -- (\x+\d+\w, \y);
      \node at ({(\x+\x+\d)/2}, -0.25) {\Large Case $E$ };
      \node at ({(\x+\x+\d)/2}, 0.6) {\LARGE $\vdots$ };
      \node at ({(\x+\x+\d)/2}, 1.5) {\LARGE $b$ };
      \node at ({(\x+\x+\d)/2}, 2.6) {\LARGE $\vdots$ };
      \node at ({(\x+\x+\d)/2}, 3.5) {\LARGE $a$ };
      \node at ({(\x+\x+\d)/2}, 4.6) {\LARGE $\vdots$ };
    \end{tikzpicture}
  }
  \caption{Cases in proof of Theorem~\ref{thm:bw:solution}.}
  \label{fig:cases}
\end{figure}

\begin{theorem}
  \label{thm:bw:solution}
  The general policy $\pi$ encoded by the rules $a1$--$e1$ above solves all
  Blocksworld instances (Stack/Unstack version) that have $on(a,b)$ as goal.
\end{theorem}
\begin{proof}
Let $P$ be a problem in $\Q$ with goal $on(a,b)$ and let $s$ be a \emph{non-goal}
and \emph{reachable state} in $P$ (from the initial state). Let us classify $s$
univocally into one of the cases depicted in Fig.~\ref{fig:cases}.
We reason by cases to show that $\pi$ achieves $on(a,b)$:
\begin{enumerate}[$\bullet$]
\item If $s$ falls in case $A$, rules $a1$ and $a2$ achieve $on(a,b)$.
\item If $s$ falls in case $B$, $b1$-$b3$ reduce case $B$ to case $A$.
\item If $s$ falls in case $C$, $c1$--$c3$ and $b2$ reduce case $C$ to case $B$.
\item If $s$ falls in case $D$, $d1$--$d3$ and $b2$ reduce case $D$ to case $B$.
\item Finally, if $s$ falls in case $E$, $e1$ with $b2$, $d1$ and $d3$ reduce case $E$ to case $B$.
\end{enumerate}
\end{proof}
}

\subsection{Blocksworld: Building a Tower}

We consider a final  generalized blocks problem, $Q_{tower}$, where the task is building a tower with all the blocks.
For this,  we consider the feature set $F'$ and the set of abstract actions $A_{F'}$ above (end of Sect.~3),
with  $I_{F'}=\{\neg X,\neg H,Z,n(x)>0,m(x)>0\}$ and $G_{F'}=\{\neg X, \neg H, m(x)=0\}$.
\Omit{
the gripper is holding the block $x$ or another block, $Z$ tells whether there is some block
below $x$, $n(x)$ counts the number of blocks above $x$, and $m(x)$ counts the number of blocks
that are different from $x$ and the block being held, and also different for the blocks that
are in the same tower that $x$ is (if $x$ is not being held).
Since $A_{clear}$ is a complete set, such abstract actions are enough to define a general
plan for $Q_{tower}$.}
For space reasons we do not show the  projections $Q_{F'}$ and $Q'_{F'}$, or the 
problem  $Q^+_{F'}$ fed to the planner. The resulting policy 
\begin{enumerate}[--]
\item \Rule{\neg X, \neg H, m(x)>0}{\text{Pick-other}},
\item \Rule{\neg X, H, m(x)>0}{\text{Put-above-$x$}},
\item \Rule{\neg X, H, m(x)=0}{\text{Put-above-$x$}}.
\end{enumerate}
is obtained  with  the FOND-SAT-based planner \cite{tomas:fond-sat} in 284 milliseconds
(the FOND planner MyND produces a buggy plan in this case).
Interestingly, the addition of the atom $\neg Z$ to $G_F$ yields a very different policy
that builds a single tower but with $x$ at the bottom.
%FINAL VERSION
%\textcolor{red}{\bf CHANGE: comment about buggy MyND behaviour, since it is not clear
%whether the problem is due to translation or MyND; on the other hand, MyND did produced
%a buggy plan in another problem as acknowledged by Robert}

\section{Discussion}

%\vskip 6pt

%\subsubsection{Arbitrary Goals, Concepts, Indexicals, and Memory}
%\smallskip

\noindent\textbf{Arbitrary Goals, Concepts, Indexicals, and Memory.}
Most of the  examples above   deal with instances  involving atomic goals
(except $\Q_{tower}$).  The definition of a generalized problem that would yield a policy for solving \emph{any Blocksworld instance}
is more challenging. Inductive, as opposed to model-based approaches, for obtaining such policies have been reported
\cite{martin-geffner:generalized,fern:generalized}. These approaches learn generalized policies from sampled instances and
their plans. They do not learn boolean and numerical features but unary predicates or concepts like ``the clear block that
is above $x$'', yet features,  concepts, and   indexical or deictic representations \cite{chapman:pointers,ballard:pointers}
are closely related to each other. The execution of general policies requires tracking a fixed number of features,
and hence \emph{constant memory}, independent of the instance size, that is given by the number of features.
This is relevant from a cognitive point view where short term memory is bounded and small \cite{ballard:memory}.

%\vskip 10pt

%\subsubsection{General Policy Size, Polynomial Features, and Width}
%\smallskip

\medskip\noindent\textbf{General Policy Size, Polynomial Features, and Width.}
% By means of a ``suitable'' numerical feature, it is possible
% to have a general policy for solving \emph{any} (solvable) classical planning problem $P$ from \emph{any} domain; namely,
The numerical  feature $h^*$ that measures the optimal distance  to the  goal can be used in the policy $\pi_{Gen}$ given by the 
rule $\Rule{h^* > 0}{MoveToGoal}$ for solving \emph{any problem}. Here $MoveToGoal$ is the abstract action $\bar{a}=\abst{h^* > 0}{h^*\mminus}$
that is sound and represents any concrete optimal  action. The problem with the feature $h^*$  is that its computation is intractable in general,
thus  it is reasonable to impose the requirement that \emph{features should be computable in (low) polynomial time.}
Interestingly, however, instances over many of the standard classical  domains  featuring   \emph{atomic goals}
have a \emph{bounded and small width}, which implies that they can be solved optimally in low polynomial time \cite{nir:ecai2012}.
This means that the general policy $\pi_{Gen}$  is not useless after all, as it solves all such instances in polynomial time.
The general policy $\pi$ computed by FOND planners above, however, involves  a fixed set of boolean variables that  can be tracked
in time that  is \emph{constant} and does not depend on the instance size.
 % \alert{Not sure what constant time means. Counting the number of blocks above $x$ is not constant.}

%\vskip 10pt

%\subsubsection{Deep Learning of Generalized Policies, and Challenge}
%\smallskip

\medskip\noindent\textbf{Deep Learning of Generalized Policies, and Challenge.}
Deep learning methods have been recently used for learning
generalized policies  in   domains like Sokoban \cite{drl:sokoban} and 3D navigation \cite{drl:3dnav}. 
Deep learning methods however learn functions from \emph{inputs of a fixed size}. Extensions  for dealing with images
or strings of arbitrary size have been developed  but images and strings have a  simple 2D or linear structure. 
% to be scanned by moving sliding windows on  a plane or along a string.
The structure of relational representations is not so uniform and this is probably one of the main reasons
that we have not yet seen deep (reinforcement) learning methods
being used to solve \emph{arbitrary instances} of the blocks world.  Deep learning methods are good for learning features
but in order to be applicable in such domains, they need the inputs expressed in terms of a fixed number
of features as well,  such as those  considered in this work. %  \alert{*** sentence is confusing because the two roles of 'features' ***}.
The open challenge is to  learn such features from data. 
The relevance  of this work to learning is that it makes precise \emph{what} needs to  be learned.
A crisp requirement is that  the set of features $F$ to be learned for a generalized problem  $\Q$
should support \emph{a sound set of actions} $A_F$ sufficient for solving the problem.

\section{Summary}

We have extended the standard semantic formulation of generalized planning to domains with instances
that do not have actions in common by introducing abstract actions: actions that operate on the common
pool of features, and whose soundness and completeness can be determined.
General plans map features into abstract actions, which if sound, can always be instantiated with a
concrete action. By a series of reductions, we have also shown how to obtain such policies using
off-the-shelf FOND planners. The work relates to a number of concepts and threads in AI, and raises a
number of crisp challenges, including the automatic discovery of the boolean and integer features that
support general plans in relational domains. 

\section*{Acknowledgements}

We thank the anonymous reviewers for useful comments.
H. Geffner is partially funded by grant TIN-2015-67959-P, MINECO, Spain.

\bibliographystyle{named}
\bibliography{control}

\end{document}